\newtheorem{theorem}{Theorem}[section]
\newtheorem{lemma}[theorem]{Lemma}
\title{Multifidelity linear regression\\ for scientific machine learning from scarce data}
\author{Elizabeth Qian\footnote{School of Aerospace Engineering, Georgia Institute of Technology, Atlanta, Georgia, USA}\,\,\footnote{School of Computational Science and Engineering, Georgia Institute of Technology, Atlanta, Georgia, USA}, Dayoung Kang\footnotemark[1], Vignesh Sella\footnote{Oden Institute for Computational Engineering and Sciences, University of Texas at Austin, Austin, Texas, USA}, Anirban Chaudhuri\footnotemark[3]}
\newcommand{\Var}{\operatorname{\mathbb{V}ar}}
\newcommand{\Cov}{\operatorname{\mathbb{C}ov}}
\newcommand{\Tr}{\mathsf{Tr}}
\newcommand{\R}{\mathbb{R}}
\newcommand{\E}{\mathbb{E}}
\begin{document}

\maketitle
\begin{abstract}
    Machine learning (ML) methods, which fit to data the parameters of a given parameterized model class, have garnered significant interest as potential methods for learning surrogate models for complex engineering systems for which traditional simulation is expensive. However, in many scientific and engineering settings, generating high-fidelity data on which to train ML models is expensive, and the available budget for generating training data is limited, so that high-fidelity training data are scarce. ML models trained on scarce data have high variance, resulting in poor expected generalization performance. 
    We propose a new \textit{multifidelity} training approach for scientific machine learning via linear regression that exploits the scientific context where data of varying fidelities and costs are available: for example, high-fidelity data may be generated by an expensive fully resolved physics simulation whereas lower-fidelity data may arise from a cheaper model based on simplifying assumptions. We use the multifidelity data within an approximate control variate framework to define new multifidelity Monte Carlo estimators for linear regression models. We provide bias and variance analysis of our new estimators that guarantee the approach's accuracy and improved robustness to scarce high-fidelity data. Numerical results demonstrate that our multifidelity training approach achieves similar accuracy to the standard high-fidelity only approach with orders-of-magnitude reduced high-fidelity data requirements.
\end{abstract}

\section{Introduction}\label{sec: intro}
Scientific and engineering decision-making rely on \textit{many-query} analyses, in which a predictive model must be evaluated many times at different inputs, initializations, or parameters. Examples include optimization, control, and uncertainty quantification. Traditional high-fidelity models of complex scientific and engineering systems are often prohibitively expensive for this many-query setting, necessitating the development and use of computationally efficient surrogate models. 
Machine learning (ML) methods, which fit to data the parameters of a given parametrized model class, can learn extremely complex functional relationships from data~\cite{hornik1989multilayer,lu2021learning}. A growing body of literature uses such methods to learn surrogate models for engineering systems, for example using linear and kernel regressions~\cite{peherstorfer16datadriven,qian2019liftlearn,qian2021reduced,mezic2005spectral,mezic2013analysis,williams2015data,williams2015kernel,nelsen2021random}, as well as nonlinear regressions, for example using neural networks~\cite{swischuk2018physics,lu2021learning,hesthaven2018non,bhattacharya2021model,li2020fourier,oleary2022derivative,luo2023efficient}. 
These works often take for granted the existence of (or ability to generate) a sufficiently large volume of high-fidelity training data for learning an accurate and robust model. This is a barrier to more widespread adoption of ML surrogate modeling methods in engineering and science because realistic budget constraints in these settings mean that high-fidelity data are usually scarce, due to the expense of obtaining data through expensive simulations or experiments. 
ML surrogate models trained on scarce data are sensitive to quirks of the data set and lead to less accurate and robust predictions~\cite{dehoop2022costaccuracy}, limiting trust in the learned models for use in high-consequence engineering and scientific application.

One way to address the data scarcity challenge is through \textit{multifidelity} scientific machine learning, which seeks to learn models from a combination of scarce high-fidelity data and more abundant lower-cost, lower-fidelity data, which are often available in scientific and engineering contexts. For example, while high-fidelity data may come from a high-resolution multi-physics simulation of a three-dimensional system, lower-fidelity data may be available from simulations that use lower resolutions or make simplifying physics assumptions. 
We first review approaches in which several separate (learned) models are combined to produce a single multifidelity model. These approaches include works that learn separate models from high- and low-fidelity data and them combine these models to issue multifidelity predictions~\cite{gerstner2021multilevel,guo2022multi}, as well as hybrid approaches which use high- and low-fidelity data to learn a correction or discrepancy for a provided physics-based low-fidelity model, and then combine the low-fidelity model with the learned correction model to issue multifidelity predictions~\cite{de2022bi,haftka1991combining,chang1993sensitivity,knill1999response,demo2023deeponet,ahmed2023multifidelity,moya2023bayesian,zhang2018multifidelity,fernandez2019linear}. 
One can also learn a low-fidelity model from low-fidelity data, and then learn a correction term: in some cases, the low-fidelity model is trained first and then the correction term is trained~\cite{liu2019multi,lu2022multifidelity,sella2023projection}; in other cases, the low-fidelity model is joined to the correction model and the training for both proceeds simultaneously in an `end-to-end' training process~\cite{howard2022multifidelity,ahmed2023multifidelity,meng2020composite}. 
These approaches which jointly train low-fidelity and correction models can also be viewed as belonging to another family of multifidelity machine learning methods in which data of varying fidelity are combined during the training process to learn the parameters of a single multifidelity model. These approaches include transfer learning methods, which first use lower-fidelity data to pre-train a model, and then use high-fidelity data to refine estimates of a subset of the model parameters~\cite{ramezankhani2022data,de2020transfer,jiang2023use,de2022neural,song2022transfer}, as well as data augmentation strategies in which high- and low-fidelity data sets are combined, potentially with different weights~\cite{sella2023projection}, or by mapping the high- and low-fidelity data to a shared feature space~\cite{sella2023projection,shi2020multi} in which the model is then learned. Bayesian learning methods provide another way to combine high- and low-fidelity data to train a single model: in multifidelity Gaussian process regression~\cite{brevault2020overview,poloczek2017multi,marques2018contour,chaudhuri2021mfegra} and co-kriging approaches~\cite{kennedy2000predicting,le2014recursive,parussini2017multi,perdikaris2017nonlinear}, the parameters of Gaussian process regression models are fit to multifidelity data by exploiting the structure of the correlation between the data of varying fidelities in a Bayesian parameter update. The works~\cite{gorodetsky2020mfnets,gorodetsky2021MFNetsDataEfficient} formulate a more general Bayesian multifidelity learning approach that can accommodate non-Gaussian distributions.

In this work, we propose a new multifidelity scientific machine learning approach for linear regression. We note that linear regression models, while comparatively simpler than nonlinear regression models such as neural networks, are a large class of models that can include models that are linear in \textit{features} that are arbitrarily nonlinear in the model \textit{inputs}, including features that are defined by neural networks~\cite{bolager2024sampling,nelsen2021random}. Our approach is based on multifidelity control variate methods for variance reduction of Monte Carlo estimators, which combine high-fidelity estimators computed using a small sample size with a correction term based on a larger set of low-fidelity samples. This correction term is formulated in a way that preserves unbiasedness of the control variate estimator with respect to the high-fidelity statistics, in contrast to some of the aforementioned correction approaches which do introduce bias.
Multifidelity control variate methods include Multilevel Monte Carlo (MLMC)~\cite{teckentrup2013further,giles2008multilevel,giles2015multilevel}, Multifidelity Monte Carlo (MFMC)~\cite{peherstorfer16datadriven,qian2018multifidelity}, generalized approximate control variates (ACV)~\cite{gorodetskyGeneralizedApproximateControl2020}, and Multilevel Best Linear Unbiased Estimators (MLBLUE)~\cite{schadenMultilevelBestLinear2020,schadenAsymptoticAnalysisMultilevel2021,crociMultioutputMultilevelBest2023,destouchesMultivariateExtensionsMultilevel2023}.
These approaches differ in the types of low-fidelity models they consider, and in the structure of their low-fidelity correction term(s). In contrast to multifidelity Bayesian inference methods like co-kriging, multifidelity control variate estimators are \textit{frequentist} methods, although connections between the two approaches have been shown~\cite{gorodetsky2020mfnets}.
Multifidelity control variate ideas have been previously used for machine learning in several different ways: the work~\cite{gerstner2021multilevel} learns a hierarchy of neural networks of different levels and then combines these neural networks in an MLMC control variate framework to issue predictions. The work~\cite{chada2022multilevel} uses multilevel sequential Monte Carlo sampling to compute the parameters of a deep neural network in a Bayesian way, while in~\cite{destouchesMultivariateExtensionsMultilevel2023}, the authors extend MLBLUE to the estimation of covariance matrices in the context of variational data assimilation problems. In the previously mentioned Bayesian multifidelity learning approaches~\cite{gorodetsky2020mfnets,gorodetsky2021MFNetsDataEfficient}, the multifidelity Bayesian posterior mean can be shown to be a control variate estimator. Another use of MLMC ideas in machine learning is in the control variate estimation of gradients in gradient-based optimization for training neural networks~\cite{shi2021multilevel,fujisawa2021multilevel} or optimization under uncertainty~\cite{de2022bi}. 

To the best of our knowledge, this work presents the first use of multifidelity control variates for surrogate model learning via linear regression. 
Our contributions are the following: 
\begin{enumerate}
    \item We propose new multifidelity control variate estimators for the unknown parameters of linear regression models. Our overall approach is generally applicable to the use of any multifidelity control variate estimator, including ACV and MLBLUE estimators, but we specifically propose new MFMC estimators for linear regression for clarity of exposition.
    \item We provide bias and variance analysis of the proposed MFMC estimators for linear regression that (i) guarantees unbiasedness of the multifidelity estimators for the unknown model parameters as well as the resulting multifidelity learned model predictions and (ii) informs optimal choices of hyperparameters for the method which improve the accuracy and robustness of our approach. 
    \item We conduct numerical experiments on both an analytical example as well as a convection-diffusion-reaction model problem. Our numerical results demonstrate that models learned from scarce high-fidelity data using our multifidelity approach have similar accuracy to standard learned models trained on orders-of-magnitude more high-fidelity data. 
\end{enumerate}

The remainder of this manuscript is organized as follows. \Cref{sec: background} formulates the linear regression problem from a statistical perspective and summarizes the multifidelity Monte Carlo approach. \Cref{sec: MF regression} presents our new multifidelity approach to linear regression, discusses choices of hyperparameters of the method, and provides analysis of method. \Cref{sec: numerics} demonstrates the efficacy of the method on two examples: an analytical model problem, and a convection-diffusion-reaction simulation. \Cref{sec: conclusions} concludes and provides a discussion of directions for future work.

\section{Background}\label{sec: background}
We introduce the linear regression problem we consider in \Cref{subsec: formulation}, followed by background on multifidelity control variate estimators in \Cref{subsec: mfmc bg}.

\subsection{Problem formulation}\label{subsec: formulation}
Let $Z\in\mathcal{Z}$ denote an input random variable with distribution $\pi$. We denote by $Y^{(1)} = f^{(1)}(Z)$ the scalar output random variable of interest, where $f^{(1)}:\mathcal{Z}\to\R$ denotes the true input-output relationship. Our focus is on settings where $f^{(1)}$ is expensive to evaluate, because it requires running an expensive computational code or conducting a physical experiment. 

We now define $x:\mathcal{Z}\to\R^d$ so that $X = x(Z)$ is a $d$-dimensional \textit{feature} random variable.
The goal of linear regression is to infer an approximate model of the form 
\begin{align}
    \hat f(z;\beta) = x(z)^\top\beta
\end{align}
by selecting $\beta\in\R^d$ such that $\hat f(Z;\beta)\approx Y^{(1)}$. 
Ideally, we would like to find the parameter $\beta$ that minimizes the expected square error over the entire probability distribution, as follows:
\begin{align}\label{eq: exact beta}
    \beta^* = \arg\min_{\beta\in\R^d}\E \frac12 \|\hat f(Z;\beta) - Y^{(1)}\|_2^2= \arg\min_{\beta\in\R^d}\E \frac12 \|X^\top\beta - Y^{(1)}\|_2^2.
\end{align}
The exact minimizer is given by
\begin{align}\label{eq: beta star}
    \beta^* = \E[XX^\top]^{-1} \E[XY^{(1)}].
\end{align}
If $X,Y$ have mean zero, then $\E[XX^\top] = \Cov[X,X]$ and $\E[XY^{(1)}] = \Cov[X,Y^{(1)}]$, and the solution~\eqref{eq: beta star} has
the interpretation that $\Cov[X^\top\beta^*-Y^{(1)},X]=0$, i.e., the residual must be linearly uncorrelated with the random inputs $X$.

In practice, computing exact expectations with respect to the measure $\pi$ is often not possible, so the expectations in~\eqref{eq: beta star} are estimated from data. The standard approach uses a training data set $\{(x_i,y_i^{(1)})\}_{i=1}^n$, where $y_i^{(1)} = f^{(1)}(z_i)$ and $x_i = x(z_i)$ for $z_i$ that are drawn i.i.d.\ from $\pi$. Denote by $X_n\in\R^{d\times n}$ the matrix whose $n$ columns are the feature data $x_1,\ldots,x_n\in\R^d$, and denote by $Y_n^{(1)}\in\R^n$ the vector whose elements are the output data $y_1^{(1)},\ldots,y_n^{(1)}$. Then, a standard approximation to~\eqref{eq: beta star} is given by
\begin{align}
    \tilde \beta^{(n)} = \left(\hat{C}^{(n)}_{XX}\right)^{-1} \hat c_{XY}^{(n)},
\end{align}
where $\hat C_{XX}^{(n)} = \frac1n X_n X_n^\top$ and $\hat c_{XY}^{(n)} = \frac1n X_n Y_n^{(1)}$ are $n$-sample estimates of the exact expected products $C_{XX}\equiv \E[XX^\top]$ and $C_{XY}\equiv\E[XY^{(1)}]$, respectively. However, although $\hat C_{XX}^{(n)}$ is an unbiased estimator for $C_{XX}$, $(\hat C_{XX}^{(n)})^{-1}$ inverse is only an \textit{asymptotically} unbiased estimator for $C_{XX}^{-1}$~\cite{hartlap2007your}, so $\tilde\beta^{(n)}$ is a biased estimator for $\beta^*$, particularly in the scarce data setting when $n$ is small. 

In this work, we consider the setting where evaluating the model $f^{(1)}$ to compute the output data $y_i^{(1)} = f^{(1)}(z_i)$ is expensive. Crucially, the feature data $x_i$ do not require evaluation of the expensive model $f^{(1)}$, and thus the cost of acquiring very good estimates of $C_{XX}$ can be considered negligible compared to the cost to compute the estimate $\hat c_{XY}^{(n)}$. In fact, in many engineering settings the distributions for the input variables $z$ are treated as uniform distributions and thus exact expressions for $C_{XX}$ can be obtained. Our focus in this work is therefore on the following alternative approximation to~\eqref{eq: beta star}:
\begin{align}\label{eq: CE beta}
    \hat \beta^{(n)} = (C_{XX})^{-1}\hat c_{XY}^{(n)},
\end{align}
where we assume we have the ability to compute $C_{XX}=\E[XX^\top]$ either exactly or with very many ($N\gg n$) input samples, but have only limited (small $n$) output samples with which to compute $\hat c_{XY}^{(n)}=\frac1n X_nY_n^{(1)}$. The estimator in \eqref{eq: CE beta} is an unbiased estimator for $\beta^*$, even when $n$ is small, and our analysis in~\Cref{subsec: analysis of MFCE} will rely on this fact.

\textbf{Randomness of regression solution and non-robustness to scarce data.}
The estimated regression coefficients in \eqref{eq: CE beta} are themselves random variables whose realizations depend on the training data. In scientific settings, where computational resources are limited and evaluating $f^{(1)}$ to obtain training data is expensive, we are often forced to work with small sample sizes $n$, which can lead to the estimated regression coefficients having high variance. Consider, for a fixed $z$, the conditional variance of the model:
\begin{align}\label{eq: conditional variance}
    \Var[\hat f(z;\hat\beta^{(n)})|Z=z] = x(z)^\top \Cov[\hat\beta^{(n)},\hat\beta^{(n)}]\, x(z).
\end{align}
Since $\hat\beta^{(n)}$ estimated from scarce data (small $n$) has high variance, estimates from scarce data are likely to be far from the $\beta^*$ given by~\eqref{eq: exact beta}. From \cref{eq: conditional variance}, we see that this in turn leads to learned model predictions that are less accurate in expectation because they are likely to be far from predictions of the ideal model defined by $\beta^*$. 
The goal of this paper is to introduce a multifidelity training strategy that uses data from both cheap low-fidelity models and the expensive high-fidelity model to reduce the variance of the estimated regression coefficients $\hat\beta$ while guaranteeing unbiasedness of the estimators. This in turn will lead to learned model predictions with lower variance and thus higher expected accuracy.

\subsection{Multifidelity Control Variate Estimators}\label{subsec: mfmc bg}
We now introduce the multifidelity control variate approach to reducing estimator variance. Recall that the standard Monte Carlo estimator of $\E[f^{(1)}(Z)]$ is given by 
\begin{align*}
    \hat \mu_n = \frac1n\sum_{i=1}^n f^{(1)}(z_i),
\end{align*}
where $\{z_i\}_{i=1}^n$ are i.i.d.\ realizations of the input random variable $Z$. This mean estimator $\hat\mu_n$ is unbiased, i.e., $\E[\hat\mu_n]=\E[f^{(1)}(Z)]$, and its variance is $\Var[\hat\mu_n] = \frac1n\Var[f^{(1)}(Z)]$. 

A control variate is another random variable $U$ with mean $\E[U] = \mu_u$, which defines the following control variate estimator:
\begin{align}\label{eq: control varaite}
    \hat\mu_n^{\rm CV} = \frac1n\sum_{i=1}^n f^{(1)}(z_i) + \alpha\left(\mu_u - \frac1n\sum_{i=1}^n u_i\right),
\end{align}
where $\alpha\in\R$ is called the control variate coefficient and $u$ is a realization of $U$. If $\mu_u$ is known exactly, the formula~\eqref{eq: control varaite} defines an \textit{exact} control variate. If we replace $\mu_u$ in~\eqref{eq: control varaite} with a sample estimate, then the resulting estimator is an \textit{approximate} control variate.
Both the exact and approximate estimators~\eqref{eq: control varaite} are unbiased, $\E[\hat\mu_n^{\rm CV}] = \E[f^{(1)}(Z)]$. The variance of the exact control variate estimator is given by
\begin{align*}
    \Var[\hat\mu_n^{\rm CV}] = \frac{\Var[f^{(1)}(Z)]}{n} + \alpha^2\frac{\Var[U]}{n} - 2\alpha \frac{\Cov[f^{(1)}(Z),U]}{n}.
\end{align*}
We can set the derivative of the above expression with respect to $\alpha$ to zero to obtain the optimal $\alpha^* = \arg\min\Var[\hat\mu_n^{\rm CV}]$:
\begin{align*}
    \alpha^* = \frac{\Cov[f^{(1)}(Z),U]}{\Var[U]}.
\end{align*}
With this choice of $\alpha$, the variance of the exact control variate estimator is given by 
\begin{align*}
    \Var[\hat\mu_n^{\rm CV}] = \frac1n \left(\Var[f^{(1)}(Z)] - \frac{\Cov[f^{(1)}(Z),U]^2}{\Var[U]}\right).
\end{align*}
Thus, the exact control variate estimator has lower variance than the standard Monte Carlo estimator if the control variate $U$ is correlated with $f^{(1)}(Z)$: the better the correlation, the greater the variance reduction. Approximate control variate estimators have more complex variance expressions and associated variance reduction conditions, but the main idea that improved correlation leads to greater variance reduction remains the same, see e.g.,~\cite{Peherstorfer15Multi,gorodetskyGeneralizedApproximateControl2020} for details. 

The main idea of multifidelity control variate estimation is to use low-fidelity models to define approximate control variates, i.e., let $f^{(2)}:\R^d\to\R$ be a lower-fidelity model for the same system described by $f^{(1)}$, and let the control variate be given by $U = f^{(2)}(Z)$. 
We can generalize this idea to a hierarchy of $K\geq 2$ models $f^{(1)},\ldots,f^{(K)}:\R^d\to\R$ by adding additional approximate control variates based on each model to the estimator~\eqref{eq: control varaite}. All multifidelity control variate strategies are variations on this theme:
for clarity of exposition, we now introduce only the multifidelity Monte Carlo (MFMC)~\cite{Peherstorfer15Multi} approach. MFMC is the focus of \Cref{sec: MF regression}, which introduces our method, and \Cref{sec: numerics}, which provides numerics. We emphasize that other multifidelity control variate strategies, such as ACV~\cite{gorodetskyGeneralizedApproximateControl2020} and MLBLUE~\cite{schadenMultilevelBestLinear2020,crociMultioutputMultilevelBest2023} estimators, can be substituted everywhere an MFMC estimator is used, and we discuss key differences between MFMC and ACV and MLBLUE where appropriate.

Recall that we consider $K\geq 2$ models $f^{(1)},\ldots,f^{(K)}:\R^d\to\R$. Let $w_k$ denote the cost of model $f^{(k)}$ and let 
\begin{align}
    \rho_{1,k}=\frac{\Cov[f^{(1)}(Z),f^{(k)}(Z)]}{\sigma_1\sigma_k}, \qquad \text{where}\quad  \sigma_k=\sqrt{\Var[f^{(k)}(Z)]}\,,
\end{align}
denote the Pearson correlation coefficient of $f^{(k)}(Z)$ with $f^{(1)}(Z)$. 
We assume $f^{(1)}$ is the high-fidelity reference model and models $f^{(2)},\ldots,f^{(K)}$ are lower-fidelity models of decreasing fidelity and decreasing cost: i.e., $w_1>w_2> \cdots >w_K$ and $1 = \rho_{11} > \rho_{12} > \cdots > \rho_{1K}$.

The MFMC estimator of $\E[f^{(1)}(Z)]$ is given by~\cite{Peherstorfer15Multi}
\begin{align}\label{eq: MFMC mean}
    \hat \mu^{\rm MF} = \frac1{m_1}\sum_{i=1}^{m_1}f^{(1)}(z_i) + \sum_{k=2}^K\alpha_k\left(\frac1{m_k}\sum_{i=1}^{m_k}f^{(k)}(z_i) - \frac1{m_{k-1}}\sum_{i=1}^{m_{k-1}}f^{(k)}(z_i)\right),
\end{align}
where $m_1<m_2<\cdots<m_K$ are the number of model evaluations for each of the $K$ models, and $\{z_i\}_{i=1}^{m_K}$ are i.i.d.\ draws of the input random variable as before.
One way to interpret~\eqref{eq: MFMC mean} is to view the control variate based on the $k$-th lower-fidelity model as a higher-sample correction to the estimator based on fewer samples of the 1st-through-$(k-1)$-th higher-fidelity models. 

The MFMC estimator is unbiased, i.e., $\E[\hat\mu^{\rm MF}] = \E[f^{(1)}(Z)]$,
and the MFMC estimator has a lower variance than the standard Monte Carlo estimator of the same cost, i.e., if $\sum_{k=1}^K m_kw_k= nw_1$, then
\begin{align*}
    \Var[\hat\mu^{\rm MF}] \leq \Var[\hat \mu_n],
\end{align*}
provided that the models satisfy certain conditions on their relative costs and correlations: loosely speaking, low-fidelity models must have sufficiently low cost to justify their low correlation (see~\cite[Corollary 3.5]{Peherstorfer15Multi} for details).  The work~\cite{Peherstorfer15Multi} provides a model selection algorithm that selects models so that these conditions are satisfied. Additionally, the work~\cite{Peherstorfer15Multi} shows that the optimal choice of the coefficients $\alpha_k$ is given by $\alpha_k = \frac{\rho_{1k}\sigma_1}{\sigma_k}$, and also provides an optimal allocation of a limited computational budget among the $K$ different models that minimizes $\Var[\hat\mu^{\rm MF}]$: that is, the optimal assignment of $m_1,\ldots,m_K$ such that $\sum_{k=1}^K m_k w_k =p$, where $p$ is the computational budget. This optimal assignment has an analytical solution and is given by
\begin{align}\label{eq: MFMC optimal allocation}
    m_1 = \frac{p}{w^\top r}, \quad \text{and} \quad m_k=m_1r_k\quad \text{for }k=2,\ldots,K,\quad\text{where}\quad r_k = \sqrt{\frac{w_1(\rho_{1,k}^2 - \rho_{1,k+1}^2)}{w_k(1-\rho_{1,k}^2)}},
\end{align}
where we set $\rho^2_{1,K+1}=0$.

Both the optimal sample allocation $m_1,\ldots,m_K$, and the optimal $\alpha_k$ depend on second-order statistics of the models $\sigma_k$ and $\rho_{1k}$. In practice, since these statistics are generally not known, they are estimated from pilot samples in order to determine the model allocation and coefficients $\alpha_k$, leading to sub-optimal $\alpha_k$ and $m_k$ choices that nevertheless have led to orders-of-magnitude cost reductions in multiple real-world applications, including subsurface resource management~\cite{mehana2023prediction}, space telescope engineering~\cite{cataldo2022multifidelity}, and topology and aircraft design optimization~\cite{chaudhuri2018multifidelity,hyun2023multifidelity}. In these applications, low-fidelity models with correlation as low as 38\% have been successfully used to reduce the variance of Monte Carlo estimators~\cite{cataldo2022multifidelity}.

One key difference between MFMC and the more general ACV and MLBLUE estimation strategies is in the sampling strategy: note that the MFMC estimator~\eqref{eq: MFMC mean} has a nested sampling strategy where the first $m_k$ inputs are evaluated at all models $f^{(1)},\ldots,f^{(k)}$. The ACV and MLBLUE estimators remove this constraint on the sampling strategy, and can achieve greater variance reductions than MFMC as a result. However, the optimal sampling strategy for both ACV and MLBLUE estimators still depends on second-order statistics for the models, and generally does not have an analytical solution and is instead obtained through numerical optimization methods, see~\cite{gorodetskyGeneralizedApproximateControl2020,schadenMultilevelBestLinear2020,crociMultioutputMultilevelBest2023} for details.

\section{Multifidelity linear regression}\label{sec: MF regression}
In this section, we introduce a multifidelity linear regression approach based on using multifidelity control variate estimators for the linear regression estimator~\eqref{eq: CE beta} rather than the standard high-fidelity only Monte Carlo estimate.
\Cref{subsec: MF estimators} presents our multifidelity linear regression setting and introduces our multifidelity linear regression strategy. \Cref{subsec: allocation and alpha discussion} presents and discusses several strategies for choosing the model allocations $m_k$ and the control variate coefficients $\alpha_k$ in the multifidelity covariance estimator. \Cref{subsec: analysis of MFCE} provides optimality analysis of some choices for the control variate coefficients. 

\subsection{Multifidelity estimators}\label{subsec: MF estimators}
This section introduces our multifidelity linear regression approach: we will introduce the estimator for the MFMC case of $K$ models $f^{(1)},\ldots,f^{(K)}$ with decreasing costs $w_1>w_2>\cdots w_K$ and decreasing fidelity $\rho_{11}> \rho_{12}> \cdots \rho_{1K}$, as before. 
We again emphasize that we use the MFMC estimator here to simplify the exposition, and note that our main idea applies equally well to other multifidelity control variate estimators like ACV and MLBLUE~\cite{gorodetskyGeneralizedApproximateControl2020,schadenMultilevelBestLinear2020}.

We assume that we have $m_1$ high-fidelity data pairs, $\{(x_i,y_i^{(1)})\}_{i=1}^{m_1}$, and $m_2> m_1$ low-fidelity data, $\{(x_i,y_i^{(2)})\}_{i=1}^{m_2}$, and so on, i.e., $m_k>m_{k-1}$ data $\{(x_i,y_i^{(k)}\}_{i=1}^{m_k}$ where $y_i^{(k)} = f^{(k)}(z_i)$ and $x_i = x(z_i)$. 
We assume that $C_{XX}$ is available (nearly) exactly, as discussed in~\Cref{subsec: formulation}. 
Let $X_{m_k}\in\R^{d\times m_k}$ denote the matrix whose columns consist of the first $m_k$ inputs, and let $Y_{m_k}^{(j)}\in\R^{m_k}$ for $j\leq k$ denote the vector whose elements are given by the first $m_k$ outputs of model $j$.
Let $A_k\in\R^{d\times d}$ for $k=2,\ldots,K$, and define the following multifidelity estimate of $c_{XY}$:
\begin{align}\label{eq: cov estimate A}
    \hat c_{XY}^{\rm MF} = \frac1{m_1}X_{m_1} Y^{(1)}_{m_1} + \sum_{k=2}^K A_k \left(\frac1{m_k}X_{m_k} Y^{(k)}_{m_k} - \frac1{m_{k-1}}X_{m_{k-1}} Y^{(k)}_{m_{k-1}}\right),
\end{align}
which defines a multifidelity estimate for the linear regression model:
\begin{align}
    \label{eq: MF CEA}
    \hat f(z;\hat\beta^{\rm MF}) = x(z)^\top\hat\beta^{\rm MF}, \qquad \text{where }
    \hat \beta^\text{MF} = (C_{XX})^{-1} \hat c_{XY}^{\rm MF}.
\end{align}
We note that it is equally possible to directly define a multifidelity control variate estimator for the unknown model parameters $\hat\beta$ or for the learned model $\hat f(z;\hat\beta)$ itself: these estimators are closely related to the approach defined in~\cref{eq: cov estimate A,eq: MF CEA}, which we discuss in more detail in \Cref{app: other CV estimators}. For now, we focus solely on the estimators defined by~\cref{eq: cov estimate A,eq: MF CEA}.

We now show (a) that the estimator $\hat c_{XY}^{\rm MF}$ is an unbiased estimator for $C_{XY}$, (b) that the estimator $\hat\beta_{\text{MF}}$ is an unbiased estimator for $\beta^*$, and (c) that the multifidelity learned models $\hat{f}(z;\hat\beta_{\text{MF}})$ issues unbiased predictions with respect to the theoretical optimal model $\hat{f}(z;\beta^*)$. 

\begin{theorem}\label{thm: unbiased} Unbiasedness of multifidelity linear regression approach:
    \begin{enumerate}
        \item $\E[\hat c_{XY}^{\rm MF}] =\E[XY^{(1)}]$,
        \item $\E[\hat \beta^{\rm MF}] = \beta^*$, and 
        \item $\E[\hat f(z;\hat \beta^{\rm MF})] = \hat f(z;\beta^*)$. 
    \end{enumerate} 
\end{theorem}

\begin{proof}
    The first statement follows from the definition of $\hat c_{XY}^{\rm MF}$ and the linearity of expectations:
    \begin{align*}
        \E[\hat c_{XY}^{\rm MF}] &= \E\left[\frac1{m_1}X_{m_1} Y^{(1)}_{m_1}\right] + \sum_{k=2}^K A_k \left(\E\left[\frac1{m_k}X_{m_k}Y^{(k)}_{m_k}\right] - \E\left[\frac1{m_{k-1}}X_{m_{k-1}} Y^{(k)}_{m_{k-1}}\right]\right)\\
        &=\E[XY^{(1)}] + \sum_{k=2}^K A_k (\E[XY^{(k)}] - \E[XY^{(k)}]) = \E[XY^{(1)}].
    \end{align*}
    The second statement now follows because $\hat\beta^{\rm MF}$ defined in~\eqref{eq: MF CEA} is linear in the estimate $\hat c_{XY}^{\rm MF}$. The third statement then follows from linearity of the model in $\hat \beta^{\rm MF}$.
\end{proof}

\textit{Remark.} We emphasize again that we consider settings in which $C_{XX}$ is available exactly. If $C_{XX}$ in~\cref{eq: MF CEA} is instead estimated from samples, then the first statement in~\Cref{thm: unbiased} remains true, but the second and third statements of~\Cref{thm: unbiased} no longer hold exactly. 

\subsection{Choices of multifidelity linear regression hyperparameters}\label{subsec: allocation and alpha discussion}
We note that our multifidelity regression approach depends on several hyperparameters: the numbers of high- and low-fidelity samples $m_k$, as well as the control variate coefficients $A_k\in\R^{d\times d}$ for $k = 1,\ldots,K$. This section presents several strategies for setting these hyperparameters and summarizes our recommendations: \Cref{subsec: CV coefficients} discusses choices of control variate coefficients and \Cref{subsec: allocation} discusses the sample allocations for the MFMC-type estimator introduced in~\Cref{subsec: MF estimators}.  Detailed analytical justifications for our recommendations are deferred to~\Cref{subsec: analysis of MFCE}. \Cref{sssec: allocation coefficient other CVs} briefly discusses choices of hyperparameters in alternative multifidelity control variate estimation strategies.

\subsubsection{Control variate coefficients}\label{subsec: CV coefficients}
We consider three different choices for the control variate coefficient $A_k$ in \eqref{eq: cov estimate A}. First, we propose the following heuristic choice:
\begin{align}\label{eq: mfmc alpha}
    \textbf{Heuristic choice:} \qquad A_k = \alpha_k^{\rm mean}I, \qquad \text{where } \alpha_k^{\rm mean} = \frac{\rho_{1,k}\sigma_1}{\sigma_k} = \frac{\Cov[f^{(1)}(Z),f^{(k)}(Z)]}{\Var[f^{(k)}(Z)]}.
\end{align}
As we described in~\Cref{sec: background}, the work~\cite{Peherstorfer15Multi} proves that $\alpha_k^{\rm mean}$ minimizes the MSE of the scalar MFMC mean estimator~\eqref{eq: MFMC mean} --- but this choice is not optimal for minimizing the variance of our $\hat c_{XY}$ estimator. Despite its sub-optimality, our numerical results show that this heuristic strategy nevertheless yields practical gains.

To define an optimal choice of $A_k$ for our multifidelity linear regression estimators, note that our goal is to estimate $C_{XY} = \E[XY^{(1)}] = \E[X\,f^{(1)}(Z)]$. Let $g^{(k)}(z) = x(z)\,f^{(k)}(z)$, and define 
\begin{align}\label{eq: C definitions}
    \Gamma_{1k} = \Cov[g^{(1)}(Z),g^{(k)}(Z)]), \quad \Gamma_{kk} = \Cov[g^{(k)}(Z),g^{(k)}(Z)]).
\end{align}
In~\Cref{subsec: analysis of MFCE}, we will show (\Cref{thm: optimal matrix alpha}) that the choice 
\begin{align}\label{eq: optimal matrix alpha}
    \textbf{Optimal matrix choice: }\qquad A_k^* = \Gamma_{1k} \Gamma_{kk}^{-1}
\end{align}
minimizes the generalized variance of~\eqref{eq: MF CEA}. \Cref{thm: optimal matrix alpha} in~\Cref{subsec: analysis of MFCE} additionally shows that this in turn minimizes both the variance of the learned model predictions, $\Var[\hat f(Z;\hat\beta^{\rm MF})|Z=z]$ and the generalized variance of the coefficients $\hat\beta^{\rm MF}$. However, we note that the optimal choice~\eqref{eq: optimal matrix alpha} depends on the second-order statistics $\Gamma_{1k}$ and $\Gamma_{kk}$, which generally are unknown and must be estimated from samples. 
Because $\Gamma_{1k}$ and $\Gamma_{kk}$ in~\eqref{eq: C definitions} are $d\times d$ covariance matrices, obtaining good estimates of these statistics generally will require many more pilot samples than estimating the scalar statistics $\rho_{1,k}$, $\sigma_1$, and $\sigma_k$ which define~\eqref{eq: mfmc alpha}. For this reason, we also consider restricting $A_k$ to be a scalar multiple of the identity: $A_k = \alpha_k I$ (similar to the heuristic choice), i.e., essentially using a scalar coefficient $\alpha_k\in\R$ instead of the matrix coefficient $A_k\in\R^{d\times d}$. We will show in \Cref{thm: optimal scalar alpha} that the choice
\begin{align}\label{eq: optimal scalar alpha}
    \textbf{Optimal scalar choice: }\qquad A_k = \alpha_k^*I, \qquad \text{where }\alpha_k^* = \frac{\Tr (\Gamma_{1k})}{\Tr (\Gamma_{kk})}
\end{align}
minimizes the generalized variance (trace of the covariance) of the multifidelity estimator over all possible scalar coefficients $\alpha_k$. We will show in~\Cref{subsec: analysis of MFCE} that this in turn minimizes an upper bound on the conditional variance of the learned model predictions, $\Var[\hat f(Z;\hat\beta^{\rm MF})|Z=z]$. Note that~\eqref{eq: optimal scalar alpha} depends only on the traces of $\Gamma_{1k}$ and $\Gamma_{kk}$, which are scalar quantities which may be easier to estimate from pilot samples than the full covariance matrices $\Gamma_{1k}$ and $\Gamma_{kk}$ (and our numerical results in \Cref{sec: numerics} support this claim).

We will compare the three choices for $A_k$ described above in our numerical experiments in \Cref{sec: numerics} for both the case where model statistics are known and the case when they must be estimated from pilot samples. Our experiments will illustrate that the optimal matrix choice~\eqref{eq: optimal matrix alpha} leads to the lowest variance when the statistics $\Gamma_{1k}$ and $\Gamma_{kk}$ are known, but that this choice can perform poorly in settings when these covariances must be estimated, which is typically the case in practical applications. The optimal scalar choice~\eqref{eq: optimal scalar alpha} is more robust to sample estimates of $\Gamma_{1k}$ and $\Gamma_{kk}$ in our experiments, and is our general recommendation. However, our numerics also show that the sub-optimal heuristic choice~\eqref{eq: mfmc alpha} still consistently leads to significant variance reduction relative to the high-fidelity only case, and this choice may be preferred when the model statistics $\rho_{1k}$ and $\sigma_k$ are known or can be estimated more easily than $\Tr(\Gamma_{1k})$ and $\Tr(\Gamma_{kk})$.

\subsubsection{Sample allocation}\label{subsec: allocation}
There are two scenarios in which our multifidelity linear regression approach can be employed: in the first, the $m_1$ high-fidelity data and $m_k$ low-fidelity data for $k =2,\ldots,K$ are fixed and given, for example if studies of the model output's dependence on the input variables have already been run and their results saved. In this case, we then choose one of the $A_k$ choices discussed in~\Cref{subsec: CV coefficients} and implement the multifidelity $\beta$ estimator with the given sample allocation $m_k$.

In the second scenario, the user starts with no data but can choose to run the models $f^{(1)},\ldots,f^{(K)}$ at different inputs to generate output data $y_i^{(k)}$, subject to constraints on the overall cost of generating the output data. The allocation given by~\cref{eq: MFMC optimal allocation} for MFMC mean estimation from~\cite{Peherstorfer15Multi} minimizes the mean squared error of the MFMC estimator. This raises the question of how to adapt the optimal sample allocation approach from MFMC mean estimation to multifidelity linear regression. A natural way to do this is to minimize the generalized variance of the vector multifidelity estimator for $c_{XY}\in\R^d$. As we will show in~\Cref{subsec: analysis of MFCE}, this would also minimize the variance of the learned model predictions, conditioned on the input variable $z$. However, just as the MSE-minimizing sample allocation for MFMC mean estimation in~\eqref{eq: MFMC optimal allocation} depends on second-order statistics of the scalars $Y^{(k)} = f^{(k)}(Z)$ which must often be estimated in practice, any attempt to minimize the generalized variance of the vector MFMC estimator~\eqref{eq: MF CEA} will depend on second-order statistics of the vectors $XY^{(k)}$ which must be estimated. Unfortunately, these second-order statistics are then given by $d\times d$ covariance matrices, which require more samples to estimate well than the second-order scalar statistics required for~\cref{eq: MFMC optimal allocation}. Because poor estimates of these statistics will lead to sub-optimal sample allocations anyway, instead of developing an analogue of~\eqref{eq: MFMC optimal allocation} for the multifidelity linear regression problem, we instead propose the heuristic strategy of using~\eqref{eq: MFMC optimal allocation} as-is to determine the sample allocation for multifidelity linear regression. We will show in our numerical results that despite not being tailored to the multifidelity linear regression problem, the strategy~\eqref{eq: MFMC optimal allocation} leads to significantly more robust and accurate models learned from scarce data than standard models trained from high-fidelity only. 

To summarize, there are two different ways we recommend choosing the sample allocation $m_k$ in the estimator definitions in~\cref{eq: MF CEA}, depending on the usage scenario:
\begin{itemize}
    \item \textbf{Option 1: when a multifidelity data set is already available,} use all available data, where $m_1,\ldots,m_K$ are determined by how many samples are available in the data set, and 
    \item \textbf{Option 2: when no data are available,} use~\eqref{eq: MFMC optimal allocation} to determine how to generate data with a computational budget of $p$.
\end{itemize}
It is perhaps interesting to consider the question of a hybrid approach, i.e., when some data are already available but the user has the option to generate more data subject to some constraints. A strategy for tackling this problem could build on recent work in context-aware sampling for multifidelity uncertainty quantification~\cite{alsup2023context,farcas2023context}, but this is a direction left for future work. 

\subsubsection{Sample allocation and coefficient selection for other multifidelity control variate estimators}\label{sssec: allocation coefficient other CVs}
In \Cref{subsec: MF estimators} we have specifically introduced MFMC-type multifidelity control variate estimators for the linear regression problem, and our discussion of hyperparameter choices in \Cref{subsec: CV coefficients,subsec: allocation} and supporting analysis in \Cref{subsec: analysis of MFCE} is based on MFMC-type analysis of optimal control variate coefficients and sample allocations. We emphasize that our main idea of using multifidelity control variate estimators for linear regression could be equally well applied to estimators such as generalized approximate control variate (ACV) estimators~\cite{gorodetskyGeneralizedApproximateControl2020} and multilevel best linear unbiased estimators (MLBLUE)~\cite{schadenMultilevelBestLinear2020,crociMultioutputMultilevelBest2023}. In that case, the optimal choice of hyperparameters for the multifidelity linear regression would follow established results for optimal coefficient selection and sampling strategies for ACV~\cite{gorodetskyGeneralizedApproximateControl2020} and MLBLUE~\cite{schadenMultilevelBestLinear2020,crociMultioutputMultilevelBest2023}. Similar to the MFMC case, these hyperparameter choices would still depend on second-order statistics for the functions $g^{(k)}(Z)$: in fact, while MFMC depends only on $\Gamma_{1k}$ and $\Gamma_{kk}$ for $k\leq K$, both ACV and MLBLUE would require $\Gamma_{jk}$ for all$j,k\leq K$, which are in general unknown and must be estimated. 
Additionally, the optimal sample allocations for MLBLUE and ACV must be computed by solving a numerical optimization, in contrast to the analytical solution that exists for MFMC. 
To simplify the narrative, we have chosen to focus on MFMC-type estimators in our numerics and analysis, but detailed numerical and theoretical investigation of ACV and MLBLUE estimators for linear regression are areas for future work.


\subsection{Optimality analysis for control variate coefficients}\label{subsec: analysis of MFCE}
This section proves the optimality of the control variate coefficient definitions in~\eqref{eq: optimal matrix alpha} and~\eqref{eq: optimal scalar alpha}. To simplify the exposition, the analytical results are proved for the bi-fidelity case where $K =2$ and we drop the $\cdot_k$ subscripts for $A$ and $\alpha$, but the generalization of the results to $K>2$ case is straightforward.

The goal is to make our multifidelity learned model robust to random realizations of the training data: we thus begin by relating the variance of learned model predictions, conditioned on the given input $z$, to the covariance of the estimator $\hat c_{XY}^{\rm MF}$.
\begin{lemma}\label{lem: variance upper bound}
    For $i = 1,\ldots,d$, let $(\lambda_i,v_i)$ denote the eigenpairs of $\Cov[\hat c_{XY}^{\rm MF}, \hat c_{XY}^{\rm MF}]$. Then,
    \begin{align}\label{eq: var upper bound}
        \Var[\hat f(Z;\hat\beta^{\rm MF})|Z=z] = \sum_{i=1}^d\lambda_i c_i \leq \|c\|_2\Tr(\Cov[\hat c_{XY}^{\rm MF}, \hat c_{XY}^{\rm MF}]),
    \end{align}
    where $c = [c_1,\ldots,c_d]^\top\in\R^d$ is a constant vector dependent on $z$ but independent of the estimator $\hat\beta^{\rm MF}$.
\end{lemma}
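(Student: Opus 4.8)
The plan is to reduce the conditional prediction variance to a quadratic form in the covariance matrix $M := \Cov[\hat C_{XY}^\alpha, \hat C_{XY}^\alpha]$, diagonalize $M$, and then close with an elementary H\"older-type estimate. First, since $\hat\beta_{\rm MF}^\alpha = (C_{XX})^{-1}\hat C_{XY}^\alpha$ and $C_{XX}$ (hence $(C_{XX})^{-1}$) is a fixed symmetric matrix under our standing assumption that $C_{XX}$ is known exactly, the linearity of covariance gives
\begin{align*}
\Cov[\hat\beta_{\rm MF}^\alpha, \hat\beta_{\rm MF}^\alpha] = (C_{XX})^{-1}\, M\, (C_{XX})^{-1}.
\end{align*}
Substituting this into the conditional-variance identity~\eqref{eq: conditional variance} and writing $b = b(z) := (C_{XX})^{-1}x(z)$, I obtain $\Var[\hat f(Z;\hat\beta_{\rm MF}^\alpha)\mid Z=z] = b^\top M b$.

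Next, $M$ is a covariance matrix and therefore symmetric positive semidefinite, so it admits an orthonormal eigendecomposition $M = \sum_{i=1}^d \lambda_i v_i v_i^\top$ with all $\lambda_i \ge 0$. Hence $b^\top M b = \sum_{i=1}^d \lambda_i (v_i^\top b)^2$, and defining $c_i := (v_i^\top b)^2 \ge 0$ yields the claimed identity $\Var[\hat f(Z;\hat\beta_{\rm MF}^\alpha)\mid Z=z] = \sum_{i=1}^d \lambda_i c_i$; the vector $c = [c_1,\dots,c_d]^\top$ is a deterministic (non-random) quantity depending on $z$ only through $x(z)$, $C_{XX}$, and the eigenvectors of $M$, so in particular it does not fluctuate with realizations of the training data. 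Finally, because every $\lambda_i \ge 0$ and every $c_i \ge 0$,
\begin{align*}
\sum_{i=1}^d \lambda_i c_i \;\le\; \Big(\max_{1\le i\le d} c_i\Big)\sum_{i=1}^d \lambda_i \;\le\; \|c\|_2\sum_{i=1}^d\lambda_i \;=\; \|c\|_2\,\Tr(M),
\end{align*}
where the second inequality uses $\max_i c_i = \|c\|_\infty \le \|c\|_2$ (valid since the $c_i$ are nonnegative) and the equality uses that the trace equals the sum of eigenvalues. This is precisely~\eqref{eq: var upper bound}.

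There is no substantive obstacle in this argument; the two points that warrant care are (i) treating $C_{XX}$ as deterministic so that the covariance of $\hat\beta_{\rm MF}^\alpha$ factors cleanly through $(C_{XX})^{-1}$, and (ii) invoking positive semidefiniteness of $M$, which simultaneously supplies a real orthonormal eigenbasis $\{v_i\}$, guarantees $\lambda_i \ge 0$ (needed for the final inequality), and justifies later interpreting $\Tr(M)$ as a generalized variance to be minimized over the control variate coefficient.
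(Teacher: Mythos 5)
Your proof is correct and follows essentially the same route as the paper's: reduce the conditional variance to the quadratic form $\tilde x^\top \Cov[\hat C_{XY}^\alpha,\hat C_{XY}^\alpha]\,\tilde x$ with $\tilde x = (C_{XX})^{-1}x(z)$, diagonalize, and set $c_i = \langle \tilde x, v_i\rangle^2$. The only (immaterial) difference is in the final estimate, where you use the $\ell^1$--$\ell^\infty$ H\"older bound $\sum_i \lambda_i c_i \le \|c\|_\infty \|\lambda\|_1 \le \|c\|_2\Tr(M)$ while the paper uses Cauchy--Schwarz followed by $\|\lambda\|_2 \le \|\lambda\|_1$.
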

\begin{proof}
    Let $\tilde x = C_{XX}^{-1}x(z)$ and note that 
    \begin{align*}
        \Var[\hat f(Z;\hat\beta^{\rm MF})|Z=z] &= \tilde x^\top \Cov[\hat c_{XY}^{\rm MF}, \hat c_{XY}^{\rm MF}]\tilde x= \sum_{i=1}^d \lambda_i \left<\tilde x,v_i\right>^2. 
    \end{align*}
    The equality in~\eqref{eq: var upper bound} follows from the assignment $c_i = \left<\tilde x,v_i\right>^2$. Then, note that $\sum_{i=1}^d \lambda_i c_i\leq \|\lambda\|_2\|c\|_2\leq \|\lambda\|_1\|c\|_2$. The inequality follows from the fact that $\|\lambda\|_1 = \Tr(\Cov[\hat c_{XY}^{\rm MF}, \hat c_{XY}^{\rm MF}])$.
\end{proof}

To minimize the variance of the learned model predictions, we therefore seek to minimize the eigenvalues of $\Cov[\hat c_{XY}^{\rm MF}, \hat c_{XY}^{\rm MF}]$. 
Let $g^{(k)}(z) = x(z)f^{(k)}(z)$ as before and note that $\hat c_{XY}^{\rm MF}$ can be rewritten as a multifidelity estimator for the mean of $g^{(1)}(z)$:
\begin{align*}\label{eq: MF vec A}
    \hat c_{XY}^{\rm MF} = \frac1{m_1}\sum_{i=1}^{m_1}g^{(1)}(z_i) + A\left(\frac1{m_2}\sum_{i=1}^{m_2}g^{(2)}(z_i) - \frac1{m_1}\sum_{i=1}^{m_1}g^{(2)}(z_i)\right),
\end{align*}
where $A\in\R^{d\times d}$. Let $C_{ij}\in\R^{d\times d}=\Cov[g^{(i)}(Z),g^{(j)}(Z)]$. Then, the autocovariance of $\hat c_{XY}^{\rm MF}$ is given by
\begin{equation}\label{eq: A autocov}
    \Cov[\hat c_{XY}^{\rm MF},\hat c_{XY}^{\rm MF}] 
    =\frac1{m_1} \Gamma_{11} + \left(\frac1{m_1} - \frac1{m_2}\right) \left( A \Gamma_{22}A^\top - \Gamma_{12}A^\top -A \Gamma_{21}\right),
\end{equation}
We now show that the optimal matrix coefficient~\eqref{eq: optimal matrix alpha} minimizes the eigenvalues of~\eqref{eq: A autocov}.

\begin{lemma}\label{lem: optimal A for vectors}
    The choice $A^* = \Gamma_{12}\Gamma_{22}^{-1}$ minimizes all eigenvalues of~\eqref{eq: A autocov}.
\end{lemma}

The following proof follows the argument from~\cite{rubinstein1985efficiency} on multivariate control variates.
\begin{proof}
    Let $D=A-\Gamma_{12}\Gamma_{22}^{-1}$. Then, we can rewrite~\eqref{eq: A autocov} as
    \begin{align*}
    \Cov[\hat c_{XY}^{\rm MF},\hat c_{XY}^{\rm MF}] 
        &=
        \frac1{m_1}\Gamma_{11} + \left(\frac1{m_1} - \frac1{m_2}\right)\left(D(\Gamma_{22})^{-1}D^\top  - \Gamma_{12}(\Gamma_{22})^{-1}\Gamma_{21}\right)\\
        &= \left(\frac1{m_1}\Gamma_{11} - \left(\frac1{m_1} - \frac1{m_2}\right)\Gamma_{12}(\Gamma_{22})^{-1}\Gamma_{21}\right) + \left(\frac1{m_1} - \frac1{m_2}\right) D(\Gamma_{22})^{-1}D^\top.
    \end{align*}
    Denote the first parenthetical term by $S_1 \equiv \left(\frac1{m_1}\Gamma_{11} - \left(\frac1{m_1} - \frac1{m_2}\right)\Gamma_{12}(\Gamma_{22})^{-1}\Gamma_{21}\right)$, and note that this term is symmetric and positive definite: grouping $1/{m_1}$ terms yields the conditional covariance of $g^{(1)}$ given $g^{(2)}$ which must be positive semi-definite and the $1/{m_2}$ term is positive. The second term above is also symmetric positive definite because it has the form $BB^\top$.
    Invoking Weyl's inequality on the eigenvalues of Hermitian matrices (see \cite[Corollary~4.3.12]{horn2012matrix}), we have
    \begin{align*}
        \lambda_i(S_1) \leq \lambda_i\left(S_1 +\left(\frac1{m_1} - \frac1{m_2}\right) D(\Gamma_{22})^{-1}D^\top\right), \quad\text{for }i = 1,2,\ldots,d,
    \end{align*}
    where $\lambda_i(H)$ denotes the $i$-th eigenvalue of the Hermitian matrix $H$.
    Thus, each eigenvalue of $\Cov[\hat c_{XY}^{\rm MF},\hat c_{XY}^{\rm MF}]$ is minimized for the choice $D=0$, and the conclusion follows.
\end{proof}

\begin{theorem}\label{thm: optimal matrix alpha}
    Let $A^*$ be given by~\Cref{lem: optimal A for vectors}. Then, 
    \begin{enumerate}
        \item the generalized variance of the estimator $\hat c_{XY}^{\rm MF}$ is minimized, 
        \item the generalized variance of the estimated regression coefficients $\hat \beta^{\rm MF}$ is also minimized, and 
        \item the variance of the model predictions conditioned on the input is also minimized. 
    \end{enumerate}
\end{theorem}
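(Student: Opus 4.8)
The plan is to prove all three statements simultaneously by establishing a Loewner (positive-semidefinite) ordering on the autocovariance matrix $\Cov[\hat C_{XY}^A, \hat C_{XY}^A]$. This is more than enough, because both the generalized variance (the trace) and every quadratic form $\tilde x^\top(\cdot)\tilde x$ are monotone with respect to the Loewner order, and all three quantities in the theorem are of one of these two types.

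First I would observe that the estimator $\hat C_{XY}^A$ in~\eqref{eq: cov estimate A} is exactly the vector multifidelity estimator $\hat g_{\rm MF}^A$ of~\Cref{lem: optimal A for vectors} in the bi-fidelity case, with $d' = d$ and $g^{(j)}(z) = x(z)f^{(j)}(z)$ as in~\eqref{eq: C definitions}; under this identification the matrices $C_{12}$ and $C_{22}$ of~\Cref{lem: optimal A for vectors} coincide with $C_{12}$ and $C_{22}$ of~\eqref{eq: C definitions}, and the optimizer $A^* = C_{12}C_{22}^{-1}$ from that lemma is precisely the choice~\eqref{eq: optimal matrix alpha}. Next I would recall the decomposition derived inside the proof of~\Cref{lem: optimal A for vectors}: writing $D = A - A^*$,
\[
\Cov[\hat C_{XY}^A, \hat C_{XY}^A] = S_1 + \left(\tfrac1{m_1} - \tfrac1{m_2}\right) D\, C_{22}^{-1} D^\top,
\]
where $S_1 = \Cov[\hat C_{XY}^{A^*}, \hat C_{XY}^{A^*}]$ does not depend on $A$, and the second summand is positive semidefinite because $m_2 > m_1$ (so the scalar factor is positive) and $C_{22} \succ 0$. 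Hence $\Cov[\hat C_{XY}^{A^*}, \hat C_{XY}^{A^*}] \preceq \Cov[\hat C_{XY}^{A}, \hat C_{XY}^{A}]$ for every $A \in \R^{d\times d}$, with equality iff $D = 0$.

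Statement 1 then follows by taking traces, since the trace is monotone on the positive-semidefinite cone (equivalently, one may cite~\Cref{cor: min eig} and sum the eigenvalues). For statement 2, I would use $\hat\beta_{\rm MF}^A = C_{XX}^{-1}\hat C_{XY}^A$, so that $\Cov[\hat\beta_{\rm MF}^A, \hat\beta_{\rm MF}^A] = C_{XX}^{-1}\Cov[\hat C_{XY}^A, \hat C_{XY}^A]C_{XX}^{-1}$ (using symmetry of $C_{XX}$); congruence by the fixed invertible matrix $C_{XX}^{-1}$ preserves the Loewner order, so the same ordering holds for $\Cov[\hat\beta_{\rm MF}^A,\hat\beta_{\rm MF}^A]$, and taking traces gives the claim. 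For statement 3, I would write, as in the proof of~\Cref{lem: variance upper bound}, $\Var[\hat f(Z;\hat\beta_{\rm MF}^A)|Z=z] = \tilde x^\top \Cov[\hat C_{XY}^A,\hat C_{XY}^A]\,\tilde x$ with $\tilde x = C_{XX}^{-1}x(z)$ fixed and independent of $A$, and then apply the Loewner ordering above to conclude that this quadratic form is minimized at $A = A^*$ for every $z$.

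The routine parts — re-deriving~\eqref{eq: A autocov}, checking that the scalar factor is positive, and the trace- and congruence-monotonicity facts — are all standard. The one point that needs care is to route statement 3 through the Loewner ordering of the covariance matrices rather than through the eigenvalue expression of~\Cref{lem: variance upper bound matrix}: the eigenvectors of $\Cov[\hat C_{XY}^A,\hat C_{XY}^A]$, and hence the coefficients $c_i = \langle \tilde x, v_i\rangle^2$, vary with $A$ (because $S_1$ and $D C_{22}^{-1}D^\top$ need not commute), so ``minimizing each eigenvalue'' does not by itself immediately minimize $\sum_i \lambda_i c_i$; the quadratic-form argument sidesteps this gap entirely. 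Everything else is bookkeeping on top of~\Cref{lem: optimal A for vectors}.
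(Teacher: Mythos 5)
Your proposal is correct, and it takes a somewhat different — and in fact tighter — route than the paper. Both arguments rest on the same decomposition from the proof of \Cref{lem: optimal A for vectors}, namely $\Cov[\hat C_{XY}^A,\hat C_{XY}^A] = S_1 + (\tfrac1{m_1}-\tfrac1{m_2})\,D\,(\cdot)\,D^\top$ with $D = A - C_{12}C_{22}^{-1}$ and $S_1$ independent of $A$. (As a side remark, the quadratic term should read $D\,C_{22}\,D^\top$ rather than $D\,C_{22}^{-1}D^\top$ — a typo you inherited from the lemma's proof — but since both matrices are positive definite this changes nothing.) From there the paper routes statements 2 and 3 through eigenvalues: it invokes Weyl's inequality to conclude that each eigenvalue of the covariance is minimized at $A = A^*$ (\Cref{cor: min eig}), combines this with the representation $\Var[\hat f(Z;\hat\beta_{\rm MF}^A)|Z=z]=\sum_i\lambda_i c_i$ from \Cref{lem: variance upper bound matrix} for statement 3, and uses a congruence-of-eigendecompositions argument for statement 2. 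You instead extract the Loewner ordering $\Cov[\hat C_{XY}^{A^*},\hat C_{XY}^{A^*}]\preceq\Cov[\hat C_{XY}^{A},\hat C_{XY}^{A}]$ directly from the decomposition and push it through traces, congruences, and quadratic forms. Your route buys genuine rigor: as you correctly observe, the coefficients $c_i=\langle\tilde x,v_i\rangle^2$ in \Cref{lem: variance upper bound matrix} depend on the eigenvectors of the covariance and hence on $A$, so ``each $\lambda_i$ is minimized'' does not by itself imply that $\sum_i\lambda_i c_i$ is minimized; the quadratic-form argument $\tilde x^\top\Cov[\hat C_{XY}^A,\hat C_{XY}^A]\tilde x$ closes that gap. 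Likewise, for statement 2 the paper's $\tilde V\Lambda\tilde V^\top$ with $\tilde V=C_{XX}^{-1}V$ is not an eigendecomposition ($\tilde V$ is not orthogonal), so the eigenvalues of $\Cov[\hat\beta_{\rm MF}^A,\hat\beta_{\rm MF}^A]$ are not the entries of $\Lambda$; your observation that congruence by the fixed invertible $C_{XX}^{-1}$ preserves the Loewner order, followed by trace monotonicity, is the clean way to get the claim. In short: same algebraic core, but your ordering-based presentation is the one I would keep.
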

\begin{proof}
    The first statement follows directly from~\Cref{lem: optimal A for vectors} since minimizing all eigenvalues of a matrix also minimizes its trace. To show the second statement, note that
    \begin{align*}
        \Cov[\hat\beta^{\rm MF},\hat\beta^{\rm MF}] = (C_{XX})^{-1}\Cov[\hat c_{XY}^{\rm MF},\hat c_{XY}^{\rm MF}](C_{XX})^{-\top},
    \end{align*}
    so if $\Cov[\hat c_{XY}^{\rm MF},\hat c_{XY}^{\rm MF}]$ has eigendecomposition $\Cov[\hat c_{XY}^{\rm MF},\hat c_{XY}^{\rm MF}]= V\Lambda V^\top$, then $\Cov[\hat\beta^{\rm MF},\hat\beta^{\rm MF}]$ has eigendecomposition $\Cov[\hat c_{XY}^{\rm MF},\hat c_{XY}^{\rm MF}]=\tilde V\Lambda \tilde V^\top$ where $\tilde V = (C_{XX})^{-1}V$. From \Cref{lem: optimal A for vectors}, each eigenvalue of $\Cov[\hat c_{XY}^{\rm MF},\hat c_{XY}^{\rm MF}]$ is minimized, so each eigenvalue of $\Cov[\hat\beta^{\rm MF},\hat\beta^{\rm MF}]$ is also minimized, and thus the generalized variance $\Tr(\Cov[\hat\beta^{\rm MF},\hat\beta^{\rm MF}])$ is minimized. The third statement follows from the equality in~\Cref{lem: variance upper bound} and~\Cref{lem: optimal A for vectors}.
\end{proof}

In the case where $A = \alpha I$, the covariance expression~\eqref{eq: A autocov} becomes:
\begin{align}\label{eq: alpha autocov}
    \Cov[\hat c_{XY}^{\rm MF},\hat c_{XY}^{\rm MF}] 
    &=\frac1{m_1} \Gamma_{11} + \left(\frac1{m_1} - \frac1{m_2}\right) \left( \alpha^2 \Gamma_{22} - \alpha \Gamma_{12} -\alpha \Gamma_{21}\right),
\end{align}
We now show that the optimal scalar coefficient in \eqref{eq: optimal scalar alpha} minimizes the trace of~\eqref{eq: alpha autocov}.

\begin{lemma}\label{lem: optimal alpha for vectors}
    The choice $\alpha^* = \Tr(\Gamma_{12})/\Tr(\Gamma_{22})$ minimizes the expression for $\Tr(\Cov[\hat c_{XY}^{\rm MF},\hat c_{XY}^{\rm MF}] )$ in~\eqref{eq: alpha autocov}.
\end{lemma}
\begin{proof}
    The trace of the covariance of~\eqref{eq: alpha autocov} is given by (recall $\Tr(M) = \Tr(M^\top)$):
    \begin{align*}
        \Tr( \Cov[\hat c_{XY}^{\rm MF},\hat c_{XY}^{\rm MF}] ) = \frac1{m_1}\Tr(\Gamma_{11}) + \left(\frac1{m_1} - \frac1{m_2}\right)\left(\alpha^2\Tr(\Gamma_{22}) - 2\alpha\Tr(\Gamma_{12})\right).
    \end{align*}
    Setting the derivative of this with respect to $\alpha$ to zero yields:
    \begin{align*}
        2\alpha\Tr(\Gamma_{22}) = 2 \Tr(\Gamma_{21}) \implies \alpha = \Tr(\Gamma_{21})/\Tr(\Gamma_{22}).
    \end{align*}
\end{proof}

\begin{theorem}\label{thm: optimal scalar alpha}
    Let $\alpha^*$ be given by~\Cref{lem: optimal alpha for vectors} and let $A = \alpha^*I$. Then, within all possible control variate coefficients of the form $A=\alpha I$,
    \begin{enumerate}
        \item the trace of the covariance of the estimator $\hat c_{XY}$ is minimized, 
        and 
        \item the upper bound from~\Cref{lem: variance upper bound} on the variance of the model predictions conditioned on the input is also minimized.
    \end{enumerate}
\end{theorem}
\begin{proof}
    The first statement follows directly from~\Cref{lem: optimal alpha for vectors} and the second statement follows from the first statement and~\Cref{lem: variance upper bound}.
\end{proof}

\section{Numerical results}\label{sec: numerics}
In this section, we present numerical results that demonstrate the efficacy of the proposed multifidelity linear regression method applied to an analytical example in \Cref{subsec:analytic} and a PDE model problem in \Cref{subsec: PDE}. We compare models learned with our proposed multifidelity linear regression approach (MF) to models trained in the standard way on only high-fidelity data (HF). To ensure fair comparisons, we compare models with equivalent training cost.

\subsection{Analytic example: exponential function}\label{subsec:analytic}
We demonstrate the multifidelity linear regression method for approximating an exponential function. The analytic expressions for the two fidelities used for the multifidelity method are
\begin{align}
    f^{(1)}(z) &= \exp{z}, \\
    f^{(2)}(z) &= 0.9\sqrt{f^{(1)}(z)} = 0.9 \times \exp(0.5z),
\end{align}
where $z \sim \mathcal{U}(0,5)$ with assumed costs of $w_1 = 1$ and $w_2=0.001$. For this analytical example, the correlation coefficient between the models is $\rho_{12}=0.97$. We fit a fourth-order polynomial model to approximate $f^{(1)}(z)$, which leads to learning five regression coefficients for the one-dimensional problem. That is, $\hat{f}(z;\beta) = \beta_1 + \beta_2z+\beta_3z^2 + \beta_4z^3 + \beta_5z^4= x(z)^\top\beta$ where $x(z) = [1, z, z^2, z^3, z^4]^\top$. We show results for both the standard high-fidelity only and the proposed multifidelity approaches for three different computational budgets of 10, 100, and 1000.

\textit{Experiments using exact model statistics.} For this analytical example, the exact model statistics $\sigma_i$, $\rho_{12}$, and $\Gamma_{12}$ and $\Gamma_{22}$ can be calculated. For a given budget, we use the exact statistics to compute the sample allocations according to~\cref{eq: MFMC optimal allocation}. The resulting sample allocation is shown in \Cref{tab:exp_allocation}. 
\begin{table}[ht]
  \centering
  \begin{tabular}{ccc}
    \toprule
    Computational budget & $m_1$ & $m_2$  \\
    \midrule
    10 & 8 & 1126 \\
    100 & 88 & 11263 \\
    1000 & 887  & 112631 \\
    \bottomrule
  \end{tabular}
  \caption{Sample allocations for different computational budgets based on~\eqref{eq: MFMC optimal allocation} assuming exact model statistics for the analytic example.}
  \label{tab:exp_allocation}
\end{table}
Based on these exact statistics, the heuristic scalar control variate coefficient~\eqref{eq: MFMC mean} is $\alpha=12.79$; the optimal scalar control variate coefficient is $\alpha^*=11.97$; and the optimal matrix control variate coefficient is 
\begin{align*}
    A^* = \begin{pmatrix}
        1.8\text{e}1 & -8.0\text{e}0 & 2.2\text{e}0 & -2.7\text{e-}1 & 2.4\text{e-}2 \\ 
        2.2\text{e}2 & -1.1\text{e}2 & 2.7\text{e}1 &-3.6\text{e}0 & 2.9\text{e-}1\\
        2.5\text{e}3 & -1.3\text{e}3 & 3.0\text{e}2 & -4.2\text{e}1 & 3.1\text{e}0 \\
        2.6\text{e}4 & -1.3\text{e}4 & 3.1\text{e}3 & -4.2\text{e}2 & 2.9\text{e}1 \\
        2.3\text{e}5 & -1.1\text{e}5 &2.7\text{e}4 & -3.7\text{e}3 & 2.4\text{e}2
    \end{pmatrix}.
\end{align*}

To facilitate comparison between the standard high-fidelity only (HF) training approach and our proposed multifidelity (MF) training approach, we first focus on multifidelity results using the optimal scalar coefficient $\alpha^*$, which is our general recommendation. We repeat the model learning process over 500 independent realizations of training data to analyze robustness of the proposed method to variations in training data.
In \Cref{fig:exp_unbiased}, the left subplot plots realizations of the first element of $\hat c_{XY}$, the central subplot realizations of the first regression coefficient $\hat \beta_1$, and the right subplot realizations of the regression model prediction at $z=5$, $\hat f(z=5;\hat\beta)$. The replicate mean over the 500 training realizations is also shown, along with the truth that would be obtained from taking exact expectations. The realizations are plotted with some transparency so that regions with more realizations appear more densely colored. These numerical results illustrate that the multi-fidelity estimators we have introduced are unbiased (\Cref{thm: unbiased}), and that the multifidelity estimators exhibit lower variance than their high-fidelity counterparts at all computational budgets. 
  
\begin{figure}[!htb]
  \centering
    \includegraphics{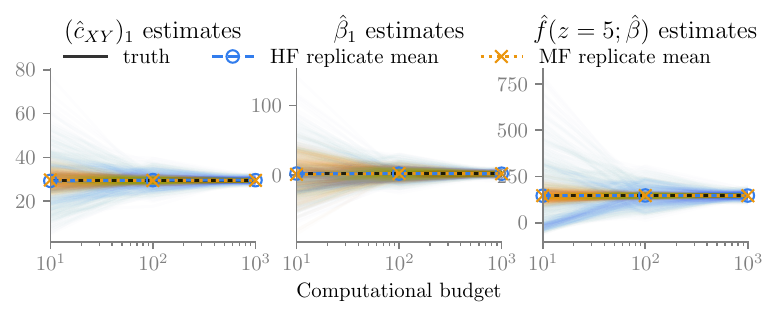}
  \caption{Exponential function example: 500 realizations (semi-transparent colored lines) of estimators for first entry of $\hat{c}_{XY}$ (left), first regression coefficient (center), and regression model prediction $\hat{f}(z=5;\hat{\beta})$ (right), when true model statistics are known. Black lines denote true values and lines with markers show the mean over the 500 realizations of training data.}
  \label{fig:exp_unbiased}
\end{figure}

\Cref{fig:exp-exact-comp} illustrates that the variance reduction achieved by our multifidelity training approach yields more accurate and robust learned models. 
\Cref{fig:exp_regcomp} plots realizations of the fitted regression models $\hat f(z;\hat \beta)$ using both the standard high-fidelity only (HF) and multifidelity (MF) training approaches for a computational budget of 100, together with a comparison to the exact regression model $\hat f(z;\beta^*)$, which can be computed for this analytical example. Note that both HF and MF learned models are unbiased estimators for the exact regression model, but that the HF approach exhibits higher variance and is less robust to variations of the training data, even with the relatively high budget of 100 high-fidelity samples for this simple example. 
\Cref{fig:exp_gen} plots the generalization error of the learned regression models trained with the HF and MF approaches at different computational budgets. For this experiment, 1000 random test data are generated from the input distribution for each learned model replicate, and the mean relative error of the learned model, $\left|\frac{\hat f(z;\hat\beta) - f^{(1)}(z)}{f^{(1)}(z)}\right|$, is evaluated for 500 realizations of the HF and MF learned models. The mean and first standard deviations over these 500 realizations are shown. 
For this example, the MF training has a similar mean generalization error than the standard HF-only training using the same computational budget, but a significantly reduced variance: the high-fidelity only training can lead to much higher generalization errors at low computational budgets. This illustrates the power of our multifidelity training strategy to learn more robust models from scarce high-fidelity data.

\begin{figure}
    \centering
    \begin{subfigure}[t]{0.48\textwidth}
        \centering
        \includegraphics{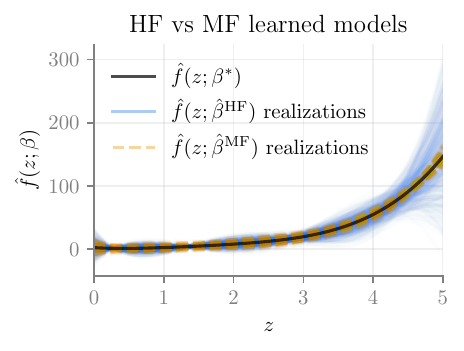}
        \caption{Visualization of 500 HF and MF learned models trained from independent realizations of training data with a computational budget of 100.}
        \label{fig:exp_regcomp}
    \end{subfigure}
    \hfill
    \begin{subfigure}[t]{0.48\textwidth}
        \centering
        \includegraphics{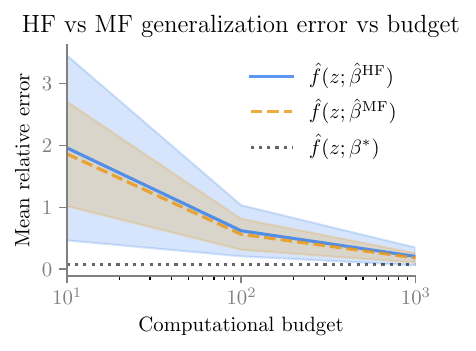}
        \caption{Generalization error over 1000 unseen test data. Plotted lines and shaded regions are the mean and first standard deviation over 500 independently trained models.}
        \label{fig:exp_gen}
    \end{subfigure}
    \caption{Exponential example: comparing learned models learned with the standard high-fidelity only (HF) and proposed multifidelity (MF) training approach. }
    \label{fig:exp-exact-comp}
\end{figure}

\textit{Experiments using inexact model statistics. }
We now explore the effect of estimating the statistics required to set the control variate coefficient $A$ and determine the model allocation in our multifidelity approach, and compare the performance of different control variate coefficient choices when these statistic estimates vary in quality. 
\Cref{fig:exp_conv} plots the (generalized) variance of estimates of $\hat c_{XY}$ (top row), $\hat\beta$ (middle row), and $\hat f(z;\hat\beta)$ (bottom row) over 500 realizations of training data. In each subplot, the standard high-fidelity only approach is compared to the proposed multifidelity approach with the three choices of control variate coefficient discussed in \Cref{subsec: CV coefficients}. The columns of \Cref{fig:exp_conv} correspond to using statistics estimated exactly (left), and using 100 (middle), and 10 (right) pilot samples. Estimated statistics are used to determine both the control variate coefficient and the model allocation: we find that using estimated model statistics to determine the sample allocation according to~\eqref{eq: MFMC optimal allocation} yields a 1\% (5\%) variation in $m_1$ and a 3\% (25\%) variation in $m_2$ when 100 (10) pilot samples are used.

\begin{figure}[!htb]
  \centering
 \includegraphics[width=0.8\textwidth]{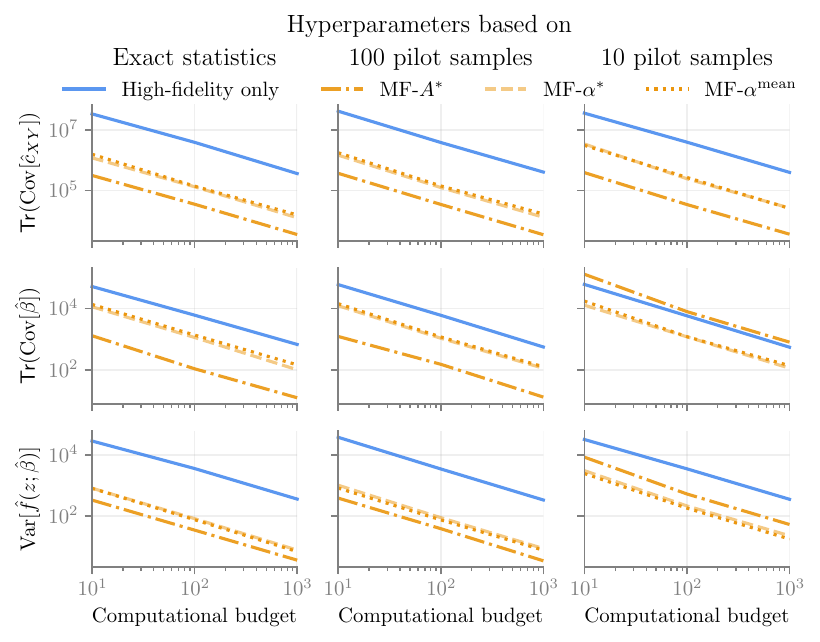}
  \caption{Analytical example: convergence of multifidelity linear regression estimators for $\hat c_{XY}$ (top), $\hat\beta$ (middle), and $\hat f(z;\hat\beta)$ (bottom) when model statistics are exact (left), estimated using 100 pilot samples (center), or 10 pilot samples (right).}
  \label{fig:exp_conv}
\end{figure}

Our results in \Cref{fig:exp_conv} show that for this analytical example, the multifidelity estimators generally have lower variance than their standard single-fidelity counterparts, which leads to more robust models with lower average errors, as discussed above. 
These results therefore demonstrate that our proposed multifidelity training approach consistently leads to more robust and accurate learned models for this analytical example.
\Cref{fig:exp_conv} also illustrates our theoretical results from \Cref{subsec: analysis of MFCE} concerning the optimal choices of control variate coefficient: when exact statistics are known, the optimal matrix choice exhibits the greatest variance reduction, while the scalar choices, MF-$\alpha^*$ and MF-$\alpha^{\rm mean}$, exhibit similar performance. For this example, the optimal matrix choice remains the most effective even when statistics are estimated from 100 pilot samples, but when statistics are estimated using just 10 pilot samples this `optimal' choice leads to sub-optimal variance reduction in the regressor predictions and in fact higher variance in the learned parameters than the high-fidelity only training achieves. This underscores our general recommendation to use the optimal scalar control variate coefficient in practical settings where the model statistics are unknown and must be estimated, although we note that for this example the heuristic scalar choice has similar performance. In fact, the heuristic choice slightly outperforms the `optimal' scalar choice when model statistics are estimated from 10 pilot samples for this analytical example, demonstrating that this may also be a reasonable choice in practice. 
With either scalar control variate coefficient choice, the multifidelity models trained with the lowest budget (just 8 high-fidelity samples) have similar prediction variance to standard learned models trained with hundreds of high-fidelity samples, demonstrating the efficacy of the multifidelity approach for learning robust and accurate models in the scarce data regime.

\subsection{A PDE model problem}\label{subsec: PDE}
We now demonstrate our multifidelity linear regression method on a two-dimensional hydrogen combustion model with five parameters. The high- and low-fidelity models were originally developed in~\cite{buffoni2010projection} and then used for sensitivity analysis in~\cite{qian2018multifidelity}. We follow the setup in~\cite{qian2018multifidelity}, which we summarize in \Cref{subsec: CDR problem}. \Cref{subsec: CDR results} then presents results for this problem.

\subsubsection{Convection-diffusion-reaction (CDR) problem}\label{subsec: CDR problem}

We consider a convection-diffusion-reaction model on a two-dimensional rectangular domain. The model assumes a premixed hydrogen flame at constant, uniform pressure, with constant, divergence-free velocity field, and equal, uniform molecular diffusivities for all species and temperatures. The dynamics of the system are described by a convection-diffusion-reaction equation with source terms modeled as in \cite{cuenot1996asymptotic} as follows:
\begin{equation} \label{eq:CDR}
\begin{split}
  \frac{\partial u}{\partial t} &= \kappa\Delta u -U_{\rm vel}\nabla u +s(u,z)\\
   s_i(u,z)&=\gamma_i\nu_i\left(\frac{W_i}{\rho_\text{mix}}\right)\left(\frac{\rho_\text{mix} Y_F}{W_F}\right)^{\nu_F}\left(\frac{\rho_\text{mix} Y_O}{W_O}\right)^{\nu_O}A_\text{pe}\exp\left(-\frac{E}{RT}\right),\,\quad \text{for } i=F,O,P,\\
  s_T(u,z) &= s_P(u,z)Q.
\end{split}
\end{equation}
In \eqref{eq:CDR}, the thermo-chemical state is $u(z,t)=[Y_F,Y_O,Y_P,T]^\top$, where $Y_F$, $Y_O$, and $Y_P$ are the mass fractions of the fuel, oxidizer, and product, respectively, and $T$ is the temperature; $\kappa$ is the molecular diffusivity, $U_{\rm vel}$ is the velocity field, and $s$ is the nonlinear reaction source term. Additionally, $\gamma_P=1$ and $\gamma_O=\gamma_F=-1$.
The reaction modeled is a one-step hydrogen combustion given by $2\text{H}_2+\text{O}_2\rightarrow 2\text{H}_2\text{O}$, where hydrogen is the fuel, oxygen acts as the oxidizer, and water is the product. 
The source terms are defined using $A_\text{pe}$ as the pre-exponential factor of the Arrhenius equation, $E$ as the activation energy, $W_i$ as the molecular weight of species $i$, $\rho_\text{mix}$ as the density of the mixture, $R$ as the universal gas constant, and $Q$ as the heat of the reaction. 

For our regression problem, the quantity of interest is the maximum temperature in the chamber at steady state. The input parameter vector for regression is given by 
\begin{equation*}
    s=[A_\text{pe},E,T_i,T_0,\phi] \in [5.5\times 10^{11},1.5\times 10^{12}]\times [1.5\times 10^3,9.5\times10^3]\times[200,400]\times[850,1000]\times[0.5,1.5],
\end{equation*}
where $T_i$ is the temperature at the inlet, $T_0$ is the temperature of the left wall of the domain, and $\phi$ is the fuel:oxidizer ratio of the premixed inflow. The parameters $A_\text{pe}$ and $E$ are assumed to be log-uniformly distributed and $T_i$, $T_0$, and $\phi$ are assumed to be uniformly distributed. We fit a quadratic model to these five inputs, so that the feature vector $x$ has length 21 (one constant, five linear, and 15 quadratic features). Due to the varying scales of the input parameters, the parameters are scaled to be order 1 before monomial features are computed.

The high-fidelity data arises from a finite-difference solver that discretizes the spatial domain into a $73\times37$ grid, resulting in $73\times37\times4=10804$ degrees of freedom that includes the mass fractions of each of the three chemical species as well as the temperature at each grid point. The low-fidelity data arises from a POD-DEIM projection-based reduced model~\cite{chaturantabut2010nonlinear} with 19 POD basis functions and 1 DEIM basis function (see \cite{buffoni2010projection} for details of the model reduction approach applied to this reacting flow problem). Statistics for these two models computed using a data set consisting of $2.4\times 10^5$ samples are presented in \Cref{tab: CDR models}~\cite{qian2018multifidelity}. This data set can be downloaded from \url{https://github.com/elizqian/mfgsa}. For all numerical experiments below, the covariance $C_{XX}$ is computed from all $2.4\times 10^5$ samples. The standard and multifidelity regression estimators for $\hat c_{XY}$ are computed by bootstrapping samples from this data set within budget constraints, assuming that high-fidelity data have cost $w_1=1.94$ and low-fidelity data have cost $w_2 = 6.2\text{e-}3$ as stated in~\cite{qian2018multifidelity}.
\begin{table}[ht]
  \centering
  \begin{tabular}{ll |l l l l}
    model & & $\mu_i$ & $\sigma_i$ & $\rho_{1i}$ & $w_i$ \\
    \hline
    High-fidelity (FD) & $f^{(1)}$
    & 1406 & 276 & 1 & 1.94\\
    Low-fidelity (POD-DEIM) & $f^{(2)}$ &
    1349 & 356 & 0.95 & 6.2e-3
  \end{tabular}
  \caption{High-sample model statistics for the high- and low-fidelity models for the CDR problem using $2.4\times 10^5$ samples~\cite{qian2018multifidelity}.}
  \label{tab: CDR models}
\end{table}

\subsubsection{Results: CDR problem}\label{subsec: CDR results}
Because analytical model statistics are not easily obtainable for this example, we compute `exact' reference statistics using all $2.4\times 10^5$ samples in the data set from which we bootstrap the training data. \Cref{fig:cdr_conv} (analogously to \Cref{fig:exp_conv}) plots the (generalized) variance of estimates of $\hat c_{XY}$ (top row), $\hat\beta$ (middle row), and $\hat f(z;\hat\beta)$ (bottom row) over 500 realizations of training data for computational budgets of 10, 100, and 1000. The columns of \Cref{fig:cdr_conv} correspond to using all $2.4\times 10^5$ samples (left), 100 pilot samples (center), and 10 pilot samples (right) to estimate model statistics that are then used to determine the control variate coefficients and sample allocations in our multifidelity approaches. \Cref{tab:cdr2d_allocation} provides the sample allocation obtained using all $2.4\times 10^5$ samples. We find that using fewer pilot samples to determine the sample allocation according to~\eqref{eq: MFMC optimal allocation} yields a 1\% (10\%) variation in $m_1$ and a 6\% (30\%) variation in $m_2$ when 100 (10) pilot samples are used.

\begin{table}[ht]
  \centering
  \begin{tabular}{ccc}
    \toprule
    Computational budget & $m_1$ & $m_2$  \\
    \midrule
    10 & 4 & 250 \\
    100 & 43 & 2504 \\
    1000 & 435 & 25045 \\
    \bottomrule
  \end{tabular}
  \caption{Sample allocation based on~\eqref{eq: MFMC optimal allocation} for the CDR problem based on reference model statistics computed using all available samples in the data set.}
  \label{tab:cdr2d_allocation}
\end{table}

\begin{figure}[!htb]
  \centering
 \includegraphics[width=0.8\textwidth]{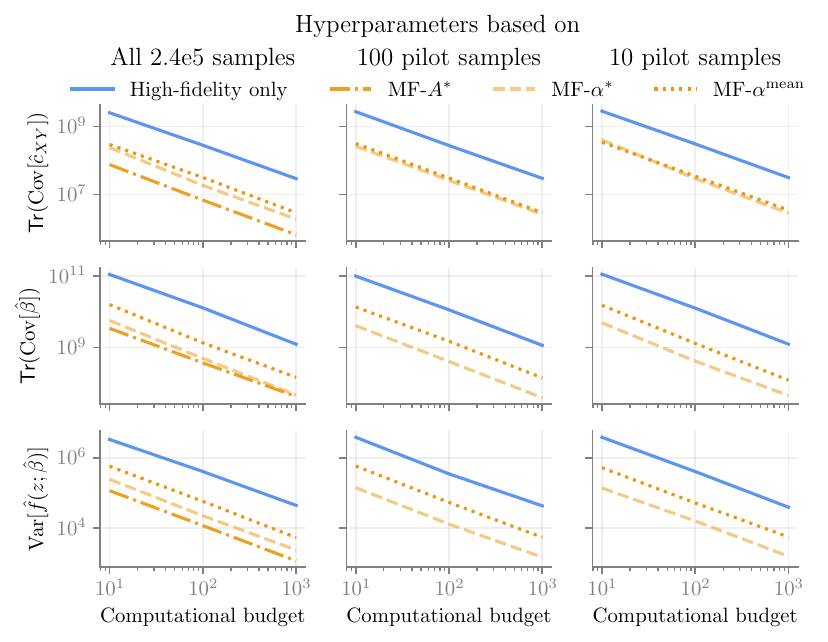}
  \caption{PDE model problem: convergence of multifidelity linear regression estimators for $\hat c_{XY}$ (top), $\hat\beta$ (middle), and $\hat f(z;\hat\beta)$ (bottom), when model statistics are estimated using $10^5$ (left), 100 (center), or 10 (right) pilot samples. Results for the multifidelity approach with the optimal matrix coefficient are omitted in the second and third columns because their variances are so large that they would significantly distort the plot axes.}
  \label{fig:cdr_conv}
\end{figure}

For this PDE model problem, \Cref{fig:cdr_conv} shows that when the model statistics are estimated using many samples, the optimal matrix control variate coefficient leads to the greatest variance reduction. However, when only 100 or 10 pilot samples are available, the `optimal' coefficient actually leads to much higher multifidelity variance than the standard high-fidelity only approach, so we have chosen to omit the MF-$A^*$ lines from the second and third columns because they would distort the axes significantly. However, both choices of scalar control variate coefficient consistently lead to multifidelity linear regression estimators with lower variance than their high-fidelity counterparts, and the optimal scalar coefficient consistently performs better than the heuristic choice. For this reason, our general recommendation is to use the optimal scalar choice, although for complex engineering problems it may be easier to use the heuristic choice, and our numerical results illustrate that this still leads to practical gains: for this problem, both scalar coefficient choices yield multifidelity learned models that, when trained with the lowest computational budget (just 4 high-fidelity samples), achieve a similar average error to standard learned models trained with $\mathcal{O}(10-100)$ times as many high-fidelity samples.

Finally, we plot in \Cref{fig:cdr_gen} the mean generalization error of the HF and MF learned models over 1000 test data bootstrapped independently from the available data set. The mean and first standard deviation over 500 learned models trained on independent realizations of training and test data are shown. The multifidelity results use the optimal scalar coefficient, our general recommendation. \Cref{fig:cdr_gen} shows that our MF training approach achieves a mean relative error that is several times less than that of the standard HF only training approach. The error reduction is especially stark at our lowest computational budget, again demonstrating the efficacy of the multifidelity approach for learning robust and accurate models in the scarce data regime. 

\begin{figure}[!htb]
  \centering
 \includegraphics{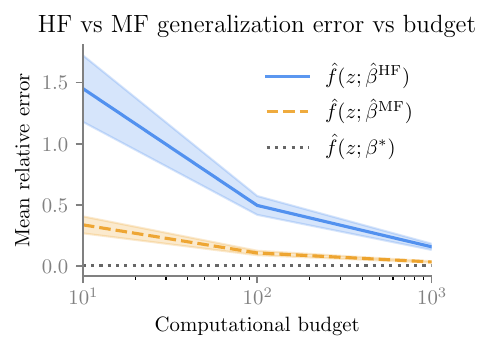}
  \caption{Generalization error over 1000 unseen test data. Plotted lines and shaded regions are the mean and first standard deviation over 500 learned models trained on independently realizations of training data.}
  \label{fig:cdr_gen}
\end{figure}

\section{Conclusions and future work}\label{sec: conclusions}
We have introduced a multifidelity training strategy for linear regression problems which leverages data of varying fidelity and cost to improve the robustness of training to scarce high-fidelity data due to limited training budgets. Our approach is based on using multifidelity control variate estimators to reduce the variance of the linear regression estimators. We propose new multifidelity Monte Carlo estimators for the linear regression problem, which we show are unbiased with respect to the high-fidelity data distribution. We discussed multiple strategies for choosing the amount of data for each fidelity level, and for choosing the control variate coefficients in the multifidelity estimators, which can be viewed as hyperparameters of the method. We provide theoretical analyses showing that there are optimal choices of control variate coefficients when model statistics are known, and provide practical recommendations for choosing the hyperparameters when model statistics are unknown and must be estimated form limited samples. Numerical experiments illustrate our theoretical results and also demonstrate that the multifidelity training approach learns more accurate and robust linear regression models than the standard high-fidelity only training approach when the training budget is limited and only scarce high-fidelity data are available. 

Many directions exist for future work. Within the linear regression setting we have introduced here, several avenues for further theoretical and practical investigation remain, including analysis of the method to determine optimal model selection and budget allocations tailored to the linear regression problem (instead of using the heuristic based on mean estimation that we recommend here), and exploring hybrid or active data acquisition strategies where some data are given but more can be obtained within budget constraints. Detailed analysis and experiments comparing the MFMC-based estimator we have presented and alternatives based on ACV and MLBLUE would also be worthwhile. There are also many interesting questions to explore in using multifidelity control variates beyond the linear regression setting to more complex models, including neural networks. While some existing works have considered aspects of these problems~\cite{gorodetsky2020mfnets,de2020transfer,gerstner2021multilevel}, many theoretical and methodological questions remain. 


\textit{Acknowledgments.}
The authors are grateful to Florian Sch\"afer and Rob Webber for helpful discussions, and to the anonymous referees for helpful feedback that has improved the clarity and completeness of the work. 
Work by EQ was supported by the US Department of Energy Office of Science Energy Earthshot Initiative as part of the `Learning reduced models under extreme data conditions for design and rapid decision-making in complex systems' project under award number DE-SC0024721. AC acknowledges support from the US Department of Energy (DOE) grant number DE-SC0021239. AC and VS acknowledge support from DARPA Automating Scientific Knowledge Extraction and Modeling (ASKEM) program under Contract No.\ HR0011262087 (award number 653002). The views, opinions, and/or findings expressed are those of the author(s) and should not be interpreted as representing the official views or policies of the Department of Defense or the U.S.~Government.

\appendix
\section{Alternative control variate estimators for the linear regression problem}\label{app: other CV estimators}
The main idea of this work is to propose a multifidelity approach to training linear regression models that combines high- and low-fidelity data using control variate estimators, thereby reducing the variance of the learned model predictions and increasing accuracy and robustness. In the main text, our focus is on the multifidelity learned model~\eqref{eq: MF CEA}, which we describe as \textit{covariance estimation} since the definitions of $\hat\beta^{\rm MF}$ and the associated learned model follow from estimates of $\hat c_{XY}^{\rm MF}$, as summarized here:
\begin{align*}
    \hat f^{\rm MF}(z) = \hat f(z;\hat\beta^{\rm MF}), \quad \hat\beta^{\rm MF} = C_{XX}^{-1}\hat c_{XY}^{\rm MF},
\end{align*}
and $\hat c_{XY}^{\rm MF}$ is given by~\eqref{eq: cov estimate A}. One could, however, directly formulate a control variate estimator for the unknown parameters $\beta$ and use this to define a model. We describe such an approach as \textit{parameter estimation}:
\begin{align}\label{eq: param estimation CV}
    \hat f^{\rm MF}_{\rm param}(z) &= \hat f(z;\hat\beta^{\rm MF}_{\rm param}), \qquad
    \hat\beta^{\rm MF}_{\rm param} = \hat\beta_{m_1}^{(1)} + \tilde A (\hat\beta_{m_2}^{(2)} - \hat\beta_{m_1}^{(2)}),
\end{align}
where $\hat\beta_{m_i}^{(j)}=\frac1{m_i}C_{XX}^{-1}X_{m_i}Y_{m_i}^{(j)}$ denotes the parameter estimator obtained from $m_i$ samples of data from model $j$, and $\tilde A\in\R^{d\times d}$.
Finally, one could directly formulate a control variate estimator for the unknown model itself: we describe this approach as \textit{model estimation}:
\begin{align}\label{eq: model estimation CV}
    \hat f^{\rm MF}_{\rm model} = \hat f(z;\hat\beta_{m_1}^{(1)}) + \tilde\alpha \left(\hat f(z;\hat\beta_{m_2}^{(2)})-\hat f(z;\hat\beta_{m_1}^{(2)})\right),
\end{align}
where we note $\tilde\alpha\in\R$ is a scalar since the output of $\hat f$ is one-dimensional. 

The covariance, parameter, and model estimation approaches are closely related and are equivalent in some cases. In particular, if we let $A =\tilde A = \tilde\alpha I$, then all three estimators are the same. Additionally, the parameter estimation approach~\eqref{eq: param estimation CV} is equivalent to the covariance approach if $\tilde A = C_{XX}^{-1} A C_{XX}$; that is, the parameter estimation approach represents a change of coordinates (via $C_{XX}^{-1}$) of the covariance estimation approach which is the focus of the main text. 

We note that the model~\eqref{eq: model estimation CV} and parameter~\eqref{eq: param estimation CV} estimation approaches can also be analyzed in a similar vein to our analysis of \Cref{subsec: analysis of MFCE}. The optimal matrix coefficient for the parameter estimator~\eqref{eq: param estimation CV} will be related to the optimal matrix coefficient for the covariance estimator by a similarity transform as discussed above, and will thus yield the same variance of parameters and model predictions. However, the optimal scalar coefficients for the three estimation approaches will generally be different, and will depend on second-order statistics of $C_{XX}^{-1}x(Z)f^{(i)}(Z)$ in the parameter case and of $x^\top(Z)C_{XX}^{-1}x(Z)f^{(i)}(Z)$ in the model case (recall the optimal coefficients depend on second-order statistics of $x(Z)f^{(i)}(Z)$ for the covariance case). When $C_{XX}$ is known exactly, the optimal coefficient for parameter estimation is as computable as the optimal coefficient for covariance estimation. However, when one generalizes to needing to estimate $C_{XX}$ as well, estimating the required second-order statistics for parameter estimation becomes more complicated. Estimating second-order statistics for the model estimation case is also generally more difficult than in the covariance estimation case: the model estimation setting essentially requires second-order statistics for the exact regression models themselves. We therefore have restricted our focus to the covariance estimation case, noting that further numerical and theoretical exploration of the alternative estimators discussed in this appendix is an area for future work.


\bibliographystyle{siam}
\bibliography{qian_career,refs}

\end{document}